\newtheorem{theorem}{Theorem}
\newtheorem{remark}{Remark}
\newcommand{\btheta}{\boldsymbol{\theta}}
\newcommand{\mcD}{\mathcal{D}}
\newcommand{\bx}{\boldsymbol{x}}
\newcommand{\by}{\boldsymbol{y}}
\newcommand{\bz}{\boldsymbol{z}}
\newcommand{\bp}{\boldsymbol{p}}
\def\BibTeX{{\rm B\kern-.05em{\sc i\kern-.025em b}\kern-.08em
    T\kern-.1667em\lower.7ex\hbox{E}\kern-.125emX}}
\newif\ifcomment
\newcommand{\ch}[1]{\ifcomment \textcolor{magenta}{Cami: #1} \fi}
\newcommand{\cra}[1]{\ifcomment \textcolor{blue}{Charul: #1} \fi}
\newcommand{\chl}[1]{\textcolor{black}{#1}}
\begin{document}

\title{Perfectly-Private Analog Secure Aggregation in Federated Learning} 
%Secure Aggregation in Federated Learning Using the Torus}
% {\footnotesize \textsuperscript{*}Note: Sub-titles are not captured in Xplore and
% should not be used}
%\thanks{Identify applicable funding agency here. If none, delete this.}

%Authorship--
%Names in alphabetic order
%Make it explicit 
%Bilibliography 
%Collusion in a explain about it worst case scenario. Not everyone can ,......... Make that clear at the begining.
%Make more explicit the trade-off between the leak and overflow
%In the abstract more explict why to use unit circle
% and change starting sentence
%Complexity: 
%Comparation: 2) Add model for reference analog finite. About Evaluation of the model. 
%General setting for criptography. EXplain why it is not possible (K,N) threshold.
%Do a partition of the data. for the complexity. 
%

 \author[1]{Delio Jaramillo-Velez}
 \author[2]{Charul Rajput}
 \author[2]{Ragnar Freij-Hollanti}
 \author[2]{Camilla Hollanti\thanks{\\ \textbf{E-mails}: delio@chalmers.se, charul.rajput@aalto.fi, camilla.hollanti@aalto.fi,\\ ragnar.freij@aalto.fi, alexandre.graell@chalmers.se\\
 
   This work was supported by a joint project grant to Aalto University and Chalmers University of Technology (PIs~A.~Graell~i~Amat and C.~Hollanti) from the Wallenberg AI, Autonomous Systems and Software Program.}}
 \author[1]{Alexandre Graell i Amat}

\affil[1]{Department of Electrical Engineering, Chalmers University of Technology, Sweden}
 \affil[2]{Department of Mathematics and Systems Analysis, Aalto University, Finland }

%1\textsuperscript{st} 
% \author{}
% \IEEEauthorblockN{Ragnar Freij}
% \IEEEauthorblockA{\textit{Depart. of Mathematics and Systems Analysis} \\
% \textit{Aalto University}\\
% Espoo, Finland \\
% ragnar.freij@aalto.fi}
% \and
% \IEEEauthorblockN{Alexandre Graell I Amat}
% \IEEEauthorblockA{\textit{Depart. of Electrical Engineering} \\
% \textit{Chalmers University of Technology}\\
% Gothenburg, Sweden\\
% alexandre.graell@chalmers.se}
% \and
% \IEEEauthorblockN{Camilla Hollanti}
% \IEEEauthorblockA{\textit{Depart. of Mathematics and Systems Analysis} \\
% \textit{Aalto University}\\
% Espoo, Finland \\
% camilla.hollanti@aalto.fi}
% \and
% \IEEEauthorblockN{Delio Jaramillo Velez}
% \IEEEauthorblockA{\textit{Depart. of Electrical Engineering} \\
% \textit{Chalmers University of Technology}\\
% Gothenburg, Sweden\\
% delio@chalmers.se}
% \and
% \IEEEauthorblockN{Charul Rajput}
% \IEEEauthorblockA{\textit{Depart. of Mathematics and Systems Analysis} \\
% \textit{Aalto University}\\
% Espoo, Finland \\
% charul.rajput@aalto.fi}
% \and
% \IEEEauthorblockN{6\textsuperscript{th} Given Name Surname}
% \IEEEauthorblockA{\textit{dept. name of organization (of Aff.)} \\
% \textit{name of organization (of Aff.)}\\
% City, Country \\
% email address or ORCID}
%}

\maketitle

\begin{abstract}

In federated learning, multiple parties train models locally and share their parameters with a central server, which aggregates them to update a global model. To address the risk of exposing sensitive data through local models, \emph{secure aggregation} \chl{via secure multiparty computation} has been proposed to enhance privacy. At the same time, \emph{perfect privacy} can only be achieved by a uniform distribution of the ``masked'' local models to be aggregated. This raises a problem when working with real-valued data, as there is no measure on the reals that is invariant under the masking operation, and hence information leakage is bound to occur. Shifting the data to a finite field circumvents this problem, but as a downside runs into an inherent accuracy--complexity tradeoff issue due to fixed-point modular arithmetic as opposed to floating-point numbers that can simultaneously handle numbers of varying magnitudes. In this paper, a novel secure parameter aggregation method is proposed that employs the torus rather than a finite field. This approach guarantees perfect privacy for each party’s data by utilizing the uniform distribution on the torus, while avoiding accuracy losses. Experimental results show that the new protocol performs similarly to the model without secure aggregation while maintaining perfect privacy. Compared to the finite field secure aggregation, the torus-based protocol can in some cases significantly outperform it in terms of model accuracy and cosine similarity, hence making it a safer choice.

\end{abstract}

%\begin{IEEEkeywords}
%Federated Learning, Secure Multiparty Computation, Secure Aggregation, Torus, Haar Measure, Perfect Privacy.
%\end{IEEEkeywords}

\section{Introduction}

Federated learning (FL) is a decentralized machine learning framework in which multiple clients collaborate to train a global model while retaining their local data. Each participant computes local updates (e.g., gradients) based on their private datasets and shares these updates with a central server \cite{kairouz2021advances,fed_opt}. The server aggregates the received gradients and uses the resulting average to update the global model. This collaborative approach allows machine learning without the need to centralize sensitive data, making FL especially attractive in fields such as healthcare, finance, and mobile systems, where data privacy is a significant concern. However, a key challenge in federated learning is ensuring that sensitive information is not inadvertently leaked through the shared gradients. Recent research has highlighted the risks of gradient leakage, where information about the underlying data can be inferred from the updates sent to the server \cite{HAP2017, MSDS2019,yin2021comprehensive}. 
%\chl{built on secure multiparty computation \cite{SMPC}} 
To address this, secure aggregation protocols are used to compute the average of parameter updates \cite{CC2009, BIK2017,SKRA2023}.  Secure Aggregation~\cite{zhou2022survey} is a frequently used privacy-preserving mechanism that hides individual model updates through cryptographic protocols. In essence, using lightweight secure multiparty computation~\cite{cramer2015secure} and secret sharing  ~\cite{beimel2011secret} schemes it masks the updates so that the randomness cancels out during aggregation, leaving the final model unaffected. This prevents a malicious server or a third party from extracting sensitive information about individual data points \cite{so2021securing}. However, secure aggregation protocols are typically defined over a finite field, which involves quantizing the data and mapping it to elements of a large prime-sized finite field \cite{BIK2017,Soleymani21}. This implies significant limitation when an overflow occurs, specifically when a very small number fails to have a precise representation within the chosen finite field \cite{Soleymani20}. The overflow issue compromises the correct recovery of the computation outcome, as the computation at each user is constrained by fixed-point conditions designed to prevent overflow. %\ch{Great!}

In contrast, our protocol adopts a different approach by utilizing the torus for \chl{one-time pad} encryption, which constitutes a continuous set~\cite{homomorphy_torus}. This method ensures perfect privacy, similar to that provided by finite fields, due to the existence of a uniform distribution on the torus. Moreover, this approach effectively mitigates the overflow problem, thus improving the reliability of the computation outcomes. Namely,  predicting the magnitude of numbers encountered during a learning process is challenging, making the selection of  the finite field size subtle and increasing the size directly affects the computational complexity. In contrast, adjusting the scaling parameter (defined in Section \ref{sec:about_L}) in our protocol to accommodate changes in the parameter range does not add extra computational cost. Thus, managing overflow issues is much simpler with the torus-based protocol making it a safer choice.

% \ch{We should remember to mention that even a small leakage can be harmful and it is hard to quantify the severity of leakage. Zero leakage gets rid of this problem. }

% \ch{Mention also that (when no stragglers) there is hardly any point of choosing the finite field case over the torus case.}

% Using perfectly secure protocols is really important in situations where any data leakage during training process is not allowed \cite{Fl_medicine}. If there is any leakage, it can compromise the privacy of the data involved. The severity of such leakage is hard to measure because it depends on various factors, such as the type of data, the amount leaked, and how the leaked data might be misused.
In the proposed secure aggregation protocol, the presence of all participants is required to fully recover the aggregated model. This ensures that no subset of colluding participants can reconstruct the model, preserving the privacy of individual updates. For this reason, we focus on the cross-silo FL setting, where all parties are assumed to participate in every training round. Privacy preserving methods in cross-silo FL have been explored from various perspectives~\cite{fl_priv_one,fl_priv_two}, including secure aggregation methods such as using double-masking techniques to completely hide the global model from the server~\cite{DHSA}, or employing homomorphic encryption-based methods~\cite{SVFL}. However, these approaches typically do not utilize masking based on a uniform distribution, which means they may fall short of offering information-theoretic security to the participating parties~\cite{Soleymani21}.
In cross-silo FL, where participants are often sensitive institutions such as hospitals \cite{Fl_medicine,MELLODY} or banks \cite{kairouz2021advances}, the privacy requirements are significantly more stringent. In these settings, perfectly secure protocols are essential, as any data leakage during training is unacceptable. Even minimal exposure can pose serious risks, potentially compromising highly sensitive information. Moreover, the impact of such leakage is difficult to quantify, as it depends on factors like the type and volume of data exposed and the potential for misuse. Therefore, ensuring perfect privacy guarantees throughout the training process is not just beneficial\textemdash it is critical.

Our main contributions are summarized as follows.
\begin{itemize}

     \item Using the uniform distribution on the torus, we present a secure aggregation protocol for Federated Averaging (FedAvg) \cite{MMRHY2017}  in the federated learning process. The torus method is not only applicable to federated learning, but to any scenario involving secure multiparty computation/secret sharing in a similar manner. %\ch{Give a reference to FedAvg} 
    
     \item %We theoretically demonstrate that our secure aggregation protocol is perfectly private.
    We theoretically demonstrate that our secure aggregation protocol provides information-theoretic security for clients' data against the server.
    \item We empirically evaluate our protocol in a cross-silo FL scenario and show that it closely approximates non-secure aggregation and outperforms the secure aggregation method using finite fields. 
\end{itemize}

%\subsection{Related work}

% (\textcolor{blue}{Mention that we fix a cross-Silo FL scenario, where everyone participates})

% \noindent\textbf{Notations:} The set $\{1,2, \ldots, N\}$ is represented by $[N]$.

\section{Preliminaries}\label{prel}

%In this section, we provide the necessary background on the Haar measure, uniform distribution over the torus, mutual information, and federated learning. For a more detailed treatment of these topics, the readers can refer to \cite[Chapter 9]{Donald13} for the Haar measure over compact groups, \cite{info_theo} for mutual information, and \cite{kairouz2021advances} for federated learning.

% \section{Linear Regression models and secret sharing schemes (Preliminaries)}

\subsection{The uniform distribution on the torus}

Let $\mathbb{T} = \mathbb{R} / \mathbb{Z}$ denote the torus, which can be understood as the set of real numbers modulo 1, or equivalently, the interval $[0,1)$ with wrap-around addition. That is, two real numbers are identified if they differ by an integer. The torus $\mathbb{T}$ carries a natural structure of a $\mathbb{Z}$-module, meaning that it supports addition of elements and scalar multiplication by integers, but not multiplication between arbitrary elements. For instance,
$$1.5 \cdot 0.3 \mod 1 \neq (1.5 \mod 1) \cdot (0.3 \mod 1),$$
highlighting the absence of a well-defined multiplicative structure on $\mathbb{T}$~\cite{homomorphy_torus}.

A canonical probability distribution\textemdash i.e.,  a distribution that is invariant under the natural symmetries of the torus $\mathbb{T}$\textemdash is the uniform distribution $\mathcal{U}$ over $[0,1)$. Via the natural isomorphism between $\mathbb{T}$ and the unit circle in the complex plane (i.e., the set of complex numbers of norm 1), this uniform distribution corresponds to the Haar measure $\mu$ on $\mathbb{T}$. The Haar measure is the unique probability measure that is invariant under rotations \cite[Chapter 9]{Donald13}. Specifically, for any measurable subset $\mathcal{A} \subseteq \mathbb{T}$ and any rotation by an angle $\varphi$, the measure remains unchanged:
$$\mu(\mathcal{A}) = \mu(e^{i\varphi} \mathcal{A})\,,$$
where $e^{i\varphi} \mathcal{A} = \{e^{i\varphi} z \:|\: z \in \mathcal{A}\}$ denotes the rotated set on the complex unit circle.

\subsection{Federated Learning}

We consider a scenario with $K$ clients that jointly train a global model $\btheta\in\mathbb{R}^{m}$ under the orchestration of a central server.
Each client $k\in[K]$ owns a real value dataset $\mcD_k=\{(\bx_j^{(k)},\by_j^{(k)})|j\in[n_k]\}$ consisting of $n_k$ samples. The goal is to minimize the aggregated loss function across the clients, 
$$
\btheta^*=\arg\min_{\btheta}f(\btheta),\quad 
%$$
%with 
%$$
f(\btheta)=\sum_{k=1}^{K}\frac{n_k}{n}f_{k}(\btheta)\,,
$$
where $f_k$ is the local loss function of client $k$ computed over the local dataset $\mcD_k$. The model $\btheta$ is trained over multiple epochs. At each epoch, the clients compute the local gradient $\nabla f_k(\btheta)$ and send them to the central server. The server updates the global model through gradient descent as, 
$$
\btheta\leftarrow \btheta-\lambda\nabla_{\btheta}f(\btheta),\quad \nabla_{\btheta}f(\btheta)=\frac{1}{n}\sum_{k=1}^{K}\nabla_{\btheta}f_k(\btheta)\,,
$$
where $\lambda$ is the learning rate and $n=\sum_{k=1}^{K}n_k$.

\section{Perfectly-Private Analog Secure \\ Aggregation Protocol}\label{proto}

In this section, we present an analog secure aggregation protocol that guarantees perfect privacy in the information-theoretic sense. The key idea is to transform the model parameters computed by each party onto the torus and perform secure aggregation directly in this continuous domain. Our protocol follows the same spirit as the original secure aggregation scheme of~\cite{BIK2017}, which operates over a finite field. However, unlike their discrete encryption, we leverage a continuous representation to encrypt the model weights.
\vspace{1ex}

% \textbf{Protocol:} We assume that all parties complete the protocol, and on a matched pair of input perturbations as follows. Each party $t$ uniformly samples a vector $z_{t,j}$ form the torus $\mathbb{T}$ for each other party $j$. Then parties $t$, and $j$ exchange $z_{t,j}$, and $z_{j,t}$ over their secure channel.

\noindent\textbf{Protocol:}  We assume that all clients complete the protocol and agree on a matched set of input perturbations as follows. Each client $k \in [K]$ obtains a vector $\bz_{k,j}\in \mathbb{T}^m$ by uniformly sampling the entries from $\mathbb{T}$ for every other client $j \in [K]$ such that $k < j$, and uses $\bz_{j,k}$ for each $j$ such that $k > j$. The clients $k$ and $j$ then exchange the corresponding vectors  $\bz_{k,j}$ and $\bz_{j,k}$ over their communication channel. 
%\ch{Should we justify why we can assume they have a secure channel? Also isn't it ok even if K-1 random parts are revealed, as you need all the values to combine the models. So unless all connections are eavesdropped, should be fine, no? So an eavesdropper should get all the randomness, and also then all the padded local models to reveal the local models.}\cra{Yes. Each client $k$ communicates to all other $K-1$ clients for $z_{k,j}$ or $z_{j,k}$, so an eavesdropper can get the local model of client $k$ only if it can hack all $K-1$ connections. Otherwise, the local model is safe}. 
\chl{Note that we do not  have to assume that all  communication links are secure, since an eavesdropper can only gain information on the local model $k$ if it can compromise all  other $K-1$ links.}
\vspace{1ex}

% \textcolor{blue}{Consider a centralized FL architecture with $N$ parties, coordinated by a central server. The party $t \in [N]$ has a local dataset $D_t$. The goal is to train a global model that minimizes global loss through an iterative process. At the beginning of a round, the server distributes the model parameters to all parties. Then each party $t \in [N]$ computes updates of parameters based on its data $D_t$. Then, each pair of parties $u$ and $v$ agrees on a random point $z_{u,v}$ on the torus, where $u<v$. Then, each party $t$ encodes their updates using $N-1$ random points $z_{1,t}, \ldots , z_{t-1,t}, z_{t,t+1}, \ldots, z_{t,N}$. Next, the server gathers all encoded values and computes the average of those to update the model. Since the encoding process occurs on the unit circle, the server cannot access the individual gradients/updates of any party.}

% \begin{center}
% \includegraphics[width=0.5\textwidth]{P3.png}
% \end{center}

\noindent\textbf{Clients:} Let \chl{$\btheta_k\in \mathbb{R}^m$} be the local model of client $k$. The model parameters are mapped to the torus by 
\begin{equation}\label{UC}
    \btheta_k \mapsto \frac{1}{L}\btheta_k \mod 1,
    \end{equation}
where $L$ is a scaling factor introduced to ensure correct decryption, as explained in Section~\ref{sec:about_L},
and the operation is applied pointwise across all coordinates of the vector. Then, client $k$ encrypts its scaled local model parameters as
\begin{equation*}
\frac{1}{L}\btheta_k \mapsto\,   \bp_k = \frac{1}{L}\btheta_k+
\sum_{j=k+1}^K \bz_{k,j}-\sum_{j=1}^{k-1} \bz_{j,k}\mod 1\,. 
\end{equation*}
\vspace{1ex}
% For the vector $z_{t,j}$, and $p_t$ we consider component-wise product, and conjugate operations. 

\noindent\textbf{Server:}  The central server receives the encrypted local updates $\bp_1, \bp_2, \ldots, \bp_K$ from each client. It then  \emph{aggregates} these updates as
\begin{align*}
\bz&=\sum_{k=1}^{K}\bp_k =\sum_{k=1}^{K}\left(\frac{1}{L}\btheta_k+
\sum_{j=k+1}^K \bz_{k,j}-\sum_{j=1}^{k-1} \bz_{j,k}\right)\\
&=\sum_{k=1}^{K}\frac{1}{L}\btheta_k \mod 1 %\mod 1 
%=\frac{1}{L}\btheta
\label{eq:agg}
    \end{align*}
% \begin{align}
% \frac{1}{ni}\log \left( \prod_{k=1}^{K} p_k\right) &= \frac{1}{ni}\log \left( \prod_{k=1}^{K} \left(e^{\frac{i\btheta_k}{L}}. \prod_{j=1}^{k-1} \overline{z}_{j,k}. \prod_{j=k+1}^N z_{k,j}\right)\right) \nonumber\\
% & = \frac{1}{ni}\log \left( \prod_{k=1}^{K} e^{\frac{i\btheta_k}{L}}\right)= \frac{1}{nL} \sum_{k=1}^{K} \btheta_k=\frac{\btheta}{L}\label{eq:agg}
%     \end{align}
to obtain the scaled global model $\frac{1}{L}\btheta$ on the torus, which is sent back to the clients for the next training round. 
Since  the scaling factor $L$ is known to all clients, they can recover the global model $\btheta$ and proceed with the next  training round.

%\begin{remark}
%    \textcolor{red}{Privacy relies on the sampling distribution used for masking, rather than on the precision. A key distinction lies in whether this masking is performed over real numbers as in our scheme, or within a finite field as in~\cite{BIK2017}. In terms of privacy, both methods offer perfect privacy as long as uniform distribution is used. The main difference lies in how computation overflow affects the result. In the prime field case, overflow leads to genuine information loss, whereasHowever, in the traditional approach, accuracy of the model can degrade if the chosen prime size for the finite field is not large enough, leading to overflow during computation.}
%\end{remark}

\begin{remark}
    When implementing our protocol on a computer, floating point arithmetic must be used instead of actual real numbers. Doing so, some truncation of real numbers is used, whereby the uniform measure on the torus will be replaced by a discrete uniform measure on some finite set. This is in perfect analogy with what happens in~\cite{BIK2017}, with the difference that our modular arithmetic is done on floating points in an interval, rather than on a prime field as in ~\cite{BIK2017}. This means that, if the chosen precision level (field size) is too small, computation overflow under multiplication leads to genuine information loss in the finite field, whereas only precision is lost in the torus case.% For an illustration of this phenomenon, see Figures~\ref{fig:one} and~\ref{fig:two}}
\end{remark}

\subsection{Choice of scaling factor}
\label{sec:about_L}

Given that the function mapping real numbers to the torus, as defined in \eqref{UC}, is not one-to-one, it is essential to carefully manage this mapping to minimize accuracy loss. To address this, we introduce a scaling factor $L$.  This factor ensures that the numbers resulting from the encryption and decryption operations remain in the interval $[0,1)$.
%The scaling factor $L$ should be chosen so that the encrypted numbers do not exceed $1$ at any point in the protocol.  \ch{The previous sentence seems redundant.}\cra{Yes it is. We can remove it.}
The appropriate choice of $L$ depends on two key factors:
\begin{enumerate}
    \item The range of local model updates, $R\in\mathbb{R}$,
    such that $||\btheta_k||_{\infty}\leq R$ for all $k\in [K]$, where $||\cdot||_{\infty}$ is the maximum norm.
    \item The number of participating clients, $K$.
\end{enumerate}

To address the first point, if $||\btheta||_{\infty}=R$, then to keep the numbers between $[0, 1)$ during encryption, we need to scale by the factor $\frac{1}{R}$.    
To address the second point, we need to scale by the factor $\frac{1}{K}$ to keep the numbers between $[0, 1)$ during decryption.
Therefore, we can choose the value of the scaling factor as 
$$L\geq KR.$$ %\ch{Isn't equality enough here?}\cra{Yes, equality is enough, but in practice, one might not be able to know the exact value of $R$, in that case, it will be an approximation. So if one wants to be very safe then the larger value can be taken for $L$.}\ch{Ok good.}

\begin{remark}
When the model parameters are small, for which $R\leq 1$, secure aggregation over a finite field can suffer from overflow issues if the filed size is not chosen large enough (see Figures~\ref{fig:one}, and ~\ref{fig:two}). In the case of secure aggregation on the torus, the scaling factor $L$ can simply be set to be greater than or equal to the number of clients $K$ to completely recover the model (explained in detail in Section \ref{about_L2}). On the other hand, if overflow occurs in a finite field due to very large parameter values, the proposed protocol can still be used successfully after carefully determining an appropriate $L$. In this case, an approximate range of parameters is required to select $L$, which can be subsequently adjusted during the learning process, as necessary. 
\end{remark}

% \begin{remark}
%     Clearly, it is not easy to determine the exact range of the parameters during the learning process beforehand. Therefore, we propose the following strategy.
% \end{remark}

% \textbf{Stretegy:} Based on the initial values of the parameters, all parties agree on a range for each parameter. During each iteration, as long as the updated values of the parameters remain within this agreed range, no further action is required. However, if the updated value of any parameter exceeds the agreed range at any party, all parties collectively decide on a new range for the parameters. The iterations then continue as long as the new range remains valid. Whenever the value of a parameter crosses the current range, all parties reestablish a new range.

\begin{remark} Unlike in the finite field case, adjusting the value of the scaling parameter $L$ according to changes in the parameter range does not add any extra computational load. However, increasing the field size in the finite field case significantly increases the complexity of all computations. In other words, handling the problem of overflow is considerably simpler compared to the finite field case.
\end{remark}

% \textbf{Note:} It is evident that if we select a broader range initially, we will need fewer changes to the range, resulting in less communication between the parties. On the other hand, if we decide on a narrower range, we will need to update it more frequently, which will increase the communication between the parties.

\subsection{Security analysis}

\begin{theorem}\label{privacy}%please do not touch this, it is now fixed and correct
   Consider a federated learning training with $K$ clients. Then, our aggregation protocol guarantees that the updates of parameters at clients $1,2, \ldots, K$, denoted as $\btheta_1, \btheta_2, \ldots, \btheta_K$, respectively, remain information-theoretically private, i.e., for a client $k\in [K]$, $$I(\btheta_k; (\bp_{t})_{t\in [K]})=0$$ %\ch{\st{$I(\btheta_k; \bp_{k})_{k\in [K]}=0,$}} 
   where $\bp_{t}$ denotes the encrypted update of the parameters from party $t\in [K]$, and $I$  the mutual information.

   %\ch{Shouldn't we rather show that for all $k$, $I(\btheta_k; \{\bp_{k}\}_{k\in [K]})=0$, i.e., given all the encrypted local models, the server cannot deduce anything about a given local model?}
\end{theorem}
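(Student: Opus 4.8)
The plan is to reduce everything to the translation invariance of the Haar (uniform) measure on $\mathbb{T}$, which here plays the role of a one-time pad. The lemma I would isolate first is: if $\bz \sim \mathcal{U}$ on $\mathbb{T}^m$ is independent of a random vector $\bx$, then $\bx + \bz \bmod 1$ is again uniform on $\mathbb{T}^m$ and, crucially, independent of $\bx$. This follows directly from the invariance $\mu(\mathcal{A}) = \mu(e^{i\varphi}\mathcal{A})$ recalled in Section~\ref{prel}: translating by $\bx$ is a rotation, so the conditional law of $\bx + \bz$ given $\bx$ equals $\mathcal{U}$ independently of the value of $\bx$. I would also record the two trivial corollaries that the negation of a uniform torus vector is uniform, and that a sum of independent torus vectors is uniform as soon as one summand is.

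With this in hand, the first step is to write each transmitted message as $\bp_k = \tfrac{1}{L}\btheta_k + \boldsymbol{m}_k \bmod 1$, with effective mask $\boldsymbol{m}_k = \sum_{j>k}\bz_{k,j} - \sum_{j<k}\bz_{j,k}$. For $K\ge 2$ each $\boldsymbol{m}_k$ is a sum of independent uniform vectors and hence uniform, so the lemma already gives $I(\btheta_k;\bp_k)=0$ for the single-message marginal. The substance of the theorem, however, is the \emph{joint} independence from the whole view $(\bp_t)_{t\in[K]}$. Here I would change variables from the $\binom{K}{2}$ pairwise masks to the effective masks $(\boldsymbol{m}_1,\dots,\boldsymbol{m}_K)$, which satisfy the single relation $\sum_k \boldsymbol{m}_k = 0$, and show by induction (peeling off one fresh independent mask per client) that their joint law is Haar measure on the zero-sum subgroup $\{\sum_k \boldsymbol{m}_k = 0\}\cong \mathbb{T}^{m(K-1)}$. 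Equivalently, any $K-1$ of the messages are jointly uniform and independent of $(\btheta_1,\dots,\btheta_K)$. It follows that, conditioned on the inputs, $(\bp_t)_t$ is uniform on the coset $\{\sum_t\bp_t = \tfrac{1}{L}\sum_t\btheta_t \bmod 1\}$, so the view depends on the inputs only through the aggregate $s=\tfrac{1}{L}\sum_t\btheta_t\bmod1$; a chain-rule/data-processing argument then collapses $I(\btheta_k;(\bp_t)_t)$ to $I(\btheta_k;s)$.

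I expect this last reduction to be the real obstacle rather than any of the algebra. The joint uniformity of the effective masks is routine once the single relation $\sum_k\boldsymbol{m}_k=0$ and the invariance lemma are available. The delicate point is the aggregate term: because $\sum_t\bp_t$ is forced to equal the scaled global model, the view cannot be literally independent of $\btheta_k$ unless the combined contribution of the remaining $K-1$ clients is itself uniform over the relevant range of $s$. I would therefore pin down the probabilistic model making $I(\btheta_k;s)=0$ precise\textemdash the torus counterpart of the finite-field secure-aggregation guarantee of~\cite{BIK2017}, in which the residual masking of the other parties is uniform on $\mathbb{T}^m$\textemdash and verify that under this model the chain of reductions indeed yields $I(\btheta_k;(\bp_t)_t)=0$. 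This modelling step, not the measure-theoretic computations, is where I anticipate the argument needing the most care.
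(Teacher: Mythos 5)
Your reductions are all correct, but your route is genuinely different from---and substantially more careful than---the paper's own proof. The paper argues in essentially one step: since the pairwise masks are uniform and independent of the inputs, the entries of each $\bp_t$ are uniform on $[0,1)$ ``both before and after conditioning,'' and therefore $I(\btheta_k;(\bp_t)_{t\in[K]})=H((\bp_t)_{t\in[K]})-H((\bp_t)_{t\in[K]}\mid\btheta_k)=0$. In other words, the paper passes directly from per-message (marginal) uniformity to equality of \emph{joint} entropies, which is precisely the inference you declined to make. The structure you build instead---effective masks $\boldsymbol{m}_k$ with $\sum_k\boldsymbol{m}_k=0$, their joint law being Haar measure on the zero-sum subgroup, any $K-1$ messages being jointly uniform and independent of all inputs, and the conditional law of the view being uniform on the coset determined by the aggregate---is what a rigorous treatment of the joint distribution actually requires, and none of it appears in the paper.

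The obstacle you flag at the end is a genuine gap, and it is a gap in the paper's proof rather than a defect of your plan. Because the masks cancel, the server's view determines $s=\frac{1}{L}\sum_{t}\btheta_t \bmod 1$, so by data processing $I(\btheta_k;(\bp_t)_{t\in[K]})\geq I(\btheta_k;s)$, and your coset argument upgrades this to equality. But $I(\btheta_k;s)=0$ does not hold in general: if the other clients' models are deterministic or known, $s$ is a nondegenerate function of $\btheta_k$ alone, so the mutual information is strictly positive; and in federated learning the $\btheta_j$ are strongly correlated (shared initialization, similar data), so one cannot expect $\frac{1}{L}\sum_{j\neq k}\btheta_j \bmod 1$ to behave as an independent uniform pad. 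What your intermediate steps do prove is the standard secure-aggregation guarantee: the view is conditionally independent of the inputs given the aggregate, i.e., nothing leaks \emph{beyond the sum}. The paper's proof hides this caveat by conflating marginal with joint uniformity (its entropy identity is also ill-defined: the joint law of $(\bp_t)_{t\in[K]}$ is supported on an $m(K-1)$-dimensional coset, so both differential entropies are $-\infty$). So the theorem as literally stated needs either to be weakened to conditional independence given $s$, or to be supplemented by an explicit distributional assumption on $(\btheta_j)_{j\neq k}$ of the kind you propose to pin down; the paper supplies neither, and your refusal to paper over that step is the correct instinct.
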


\begin{proof}%please do not touch this, it is now fixed and correct

In our protocol, the entries of $ \bz_{t,j}$ are drawn from the uniform distribution defined by the Haar measure on the torus, independently of $(\btheta_k)_{k\in [K]}$.  The encrypted value $\bp_t$ of the local model $\btheta_t$ at client $t$ is defined as
$$
\bp_t=\frac{1}{L}\btheta_t+
\sum_{j=t+1}^K \bz_{t,j}-\sum_{j=1}^{t-1} \bz_{j,t}\mod 1.
$$
Thus, we conclude that the entries in \( \bp_t \) are uniformly distributed over \([0, 1)\) both before and after conditioning on~$(\btheta_k)_{k\in[K]}$. %In particular, the \( \bp_k \) are independent of each other.  
Therefore, for any given $k\in [K]$,
\begin{align*}
        I(\btheta_k; (\bp_t)_{t\in[K]}) 
        &=H((\bp_t)_{t\in[K]})-H((\bp_t)_{t\in[K]}|\btheta_{k})=0,
    \end{align*}
%\ch{Should it be small $k$ in $\btheta_k$ in the last entropy? And $\btheta_t$ should be $\btheta_k$?}
where $H$ denotes the entropy.
\end{proof}

\subsection{Complexity analysis}
%\subsection{Complexity analysis}
%\ch{I think this section could be cleared by displaying the results in a table, after explaining what we are counting. Might also save a bit of space.}\cra{I added a table. But should we remove the explanation for it? Also in the title instead of complexity, I think cost analysis is better. We can change it back if not suitable.}

In Table~\ref{tb:1}, we summarize the computational, communication, and storage complexities for the clients and the server assuming $m$ model parameters. The complexities are expressed in terms of the number of real operations/symbols.%, with the actual complexity depending on the chosen representation.
\begin{table}[t!]
 \caption{Complexity summary
 \label{tb:1}}
 \vspace{-3ex}
 \begin{center}
 \scalebox{.8}{ 
 \begin{tabular}{cc c } 
  \toprule
 Cost &  For clients &  For server \\
 \midrule
 Computational & $m^2+m(K-1)$ & $m(K-1)$  \\ [1ex]
 Communication & $mK$  & $mK$ \\[1ex]
 Storage & $mK$ & $m$ \\ [1ex] 
  \bottomrule
 \end{tabular}
 }
 \end{center}
 \vspace{-6ex}
 \end{table}

For the clients, the computational complexity involves mapping the updated model parameters to the torus, which requires $m$ multiplications and encrypting the parameters, which requires $K-1$ additions/subtractions per parameter. This results in a total complexity of $m^2+m(K-1)$. The communication complexity includes sharing random values with other clients for each model parameter, which amounts to $(K-1)m$ \chl{real symbol} transmissions and sending $m$ encrypted parameters to the server. The total communication complexity is thus $(K-1)m + m = mK$. The clients need to store $(K-1)m$ random points on the torus and $m$ parameter updates, yielding a total storage complexity of $mK$.

%\ch{Need to be clear what is the measure of your complexity. For computation, state clearly that you are counting arithmetic operations. For communication and storage, you are counting above and below the number of real symbols to be stored/sent, so the complexity really depends on the magnitude of these symbols. If you are using 32-bit floating point, then storing one real symbol represented as a floating point number takes 32 bits to be stored/communicated, right? You might have to google how these are typically displayed in the case of real numbers. Or maybe just mention that this is the complexity just in terms of the number of real operations/symbols, and the actual complexity will of course depend on the chosen respresentation.} 

For the server, the computational complexity of decrypting all parameter updates is $m(K-1)$. The communication complexity for sending data to the clients is $mK$, and the storage complexity involves the update of $m$ parameters.

% \begin{table}[hbt!]
% \caption{Cost summary of parties.% \ch{Declare meaning of parameters here. Fix the problem with the right-hand side table border.} \\ %\textcolor{cyan}{('?' for the complexity of mapping real data to unit circle)}
% \label{tb:1}}
% \begin{center}
% \begin{tabular}{cc c c} 
%  \hline

% Cost &  Our protocol &  Finite field \cite{BIK2017}  & w/o Secure Aggr. \\
 
% \hline
% \hline
% % Mapping of real data & $\mathcal{O}(nm\rho)$    & Quantization & NA \\[1ex] 
%  %\hline

% Computational & $\mathcal{O}(mN)$ & $\mathcal{O}(mN)$ & $\mathcal{O}(1)$ \\ [1ex]
%  \hline
  
% Communication & $\mathcal{O}(mN)$  & $\mathcal{O}(mN)$ & $\mathcal{O}(m)$ \\[1ex]
% \hline
% Storage & $\mathcal{O}(mN)$ & $\mathcal{O}(mN)$ & $\mathcal{O}(1)$ \\ [1ex] 
%  \hline

%  % \multicolumn{4}{c}{\textbf{Note:} Here $\rho$ is the cost of computing map given in \eqref{UC}.} \\
%  %\hline
% \end{tabular}
% \end{center}
% \end{table}

% \begin{table}[hbt!]
% \caption{Cost summary of server. \label{tb:2}}
% \begin{center}
% \begin{tabular}{c c c c} 
%  \hline
%  Cost & Our protocol  & Finite field \cite{BIK2017} & w/o Secure Aggr. \\ [0.5ex] 
%  \hline\hline

% Computational & $\mathcal{O}(mN)$  & $\mathcal{O}(mN)$ & $\mathcal{O}(m)$ \\ [1ex]
%  \hline
  
% Communication & $\mathcal{O}(mN)$  & $\mathcal{O}(mN)$ & $\mathcal{O}(m)$ \\[1ex]
% \hline
% Storage & $\mathcal{O}(m)$  & $\mathcal{O}(m)$ & $\mathcal{O}(m)$ \\ [1ex] 
%  \hline
% \end{tabular}
% \end{center}
% \end{table}

\section{Experiments}\label{exper}
%\subsection{Setup}

We consider a standard cross-silo federated learning scenario, using  federated averaging as the aggregation method. We conduct experiments with 5, 10, 15, 20, and 30 clients \cite{realistic_health,MELLODY}, on the MNIST and CIFAR-10 image datasets, utilizing a single-layer neural network for MNIST and a ResNet-18 model pretrained on ImageNet for CIFAR-10. Both datasets (containing 60,000 and 50,000 samples, respectively) are randomly and evenly distributed among the clients, ensuring homogeneous data distribution for training. We use the cross-entropy loss and stochastic gradient descent (SGD). For MNIST, we use a learning rate of 0.01, a batch size of 64, and a single local epoch per round. For CIFAR-10, we incorporate momentum and consider the hyperparameters from \cite{BIK2017}: a learning rate of 0.05, momentum of 0.9, and a weight decay of 0.001. Furthermore, the batch size is set to 128, and the number of local epochs is set to 5. The reported results are averaged over 10 runs for MNIST and 5 runs for CIFAR-10. 

We use PyTorch for our simulations, which employs a default 32-bit floating-point representation for the model parameters. This precision is maintained for secure aggregation over the torus, and we fix $L$ equal to the number of clients. %Table~\ref{values_L} illustrates the impact of different $L$ values on this type of secure aggregation. 
For secure aggregation over a finite field, we examine two scenarios: one where the finite field size matches the floating-point representation of the model parameters and another where a mismatch occurs. We evaluate the performance  using two metrics: (i) the top accuracy achieved by the models, and (ii) the cosine similarity between the global model obtained with secure aggregation and no secure aggregation. The cosine similarity measures the angle between two model vectors; a value of $1$ indicates that the models point in exactly the same direction.

\begin{figure*}[!t]
    \centering
    \includegraphics[width=0.8\textwidth]{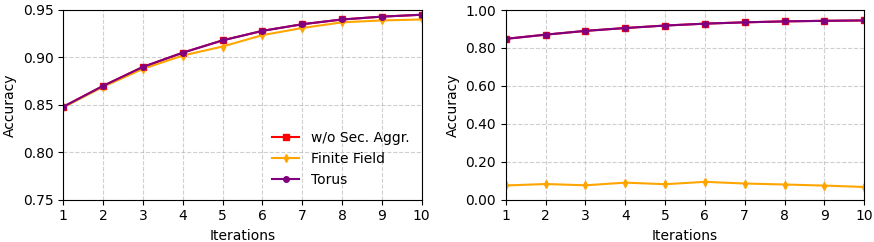}
    % \label{fig:one}
    %\vspace{-1ex}
    \caption{Accuracy of the three aggregation protocols on the MNIST dataset with $K=10$ clients and $L=K$. The test accuracy of the global model is shown using finite field secure aggregation with  field sizes of $2^{31}-1$ (left) and  $2^{15}-1$ (right).}
    \label{fig:one}
    \vspace{-1ex}
\end{figure*}

\begin{figure*}[!t]
    \centering
    \includegraphics[width=0.8\textwidth]{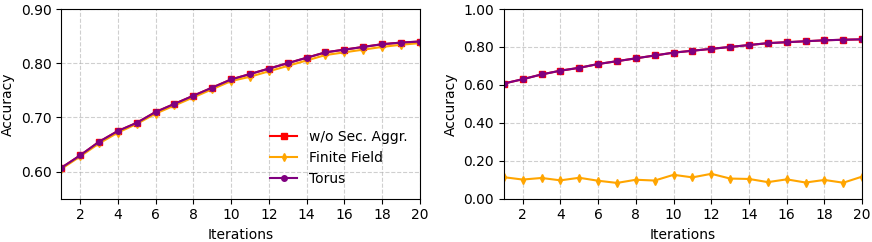}
    % \label{fig:one}
    %\vspace{-1ex}
    \caption{Accuracy of the three aggregation protocols on the CIFAR-10 dataset with $K=10$ clients and $L=K$. The test accuracy of the global model is shown using finite field secure aggregation with  field sizes of $2^{31}-1$ (left) and  $2^{15}-1$ (right).}
    \label{fig:two}
 %  \vspace{6ex}
\end{figure*}

\subsection{Secure aggregation on finite fields with size $2^{31}-1$}
In this case, the prime cardinality for the finite field is set to $p = 2^{31} - 1$. We use a 7-degree precision for fixed-point operations, as referenced in \cite{precision_values}. We analyze the impact of increasing the number of clients on the model accuracy and the cosine similarity between the model without secure aggregation and those with secure aggregation.

As shown in Table~\ref{cosine_32} and Table~\ref{accuracy_32}, for both datasets, the impact of secure aggregation on the torus is negligible, as we can recover the exact model (see also Figures~\ref{fig:one}, and ~\ref{fig:two}). However, in the case of (modular) fixed-point aggregation, the impact on the accuracy and cosine similarity, though initially minimal, becomes more pronounced as the number of clients increases.

%%%%%%%%%%%%%%%%32bits##################

\begin{table}[!t]
\caption{Cosine similarity between the global model and the securely aggregated global models for the MNIST and CIFAR-10 datasets. Finite field size $2^{31}-1$, $L=K$.}
\vspace{-3ex}
\begin{center}
\scalebox{.8}{ 
\begin{tabular}{ccc}
\toprule
Clients $K$ &Finite Fields & Torus  \\
\midrule
\multicolumn{3}{c}{MNIST} \\
\midrule
 $5$ & $0.994 \pm .001$ & $1.000 \pm .000$  \\
$10$ & $0.988 \pm .006$ & $1.000 \pm .000$  \\
$15$ & $0.978 \pm .016$ & $1.000 \pm .000$   \\
$20$ & $0.964 \pm .025$ & $1.000 \pm .000$   \\
$30$ & $0.956 \pm .008$ & $1.000 \pm .000$   \\
\midrule
\multicolumn{3}{c}{CIFAR-10} \\
\midrule
$5$ & $0.972 \pm .004$ & $1.000 \pm .000$  \\
$10$ & $0.945 \pm .012$ & $1.000 \pm .000$  \\
$15$ & $0.914 \pm .023$ & $1.000 \pm .000$   \\
$20$ & $0.897 \pm .013$ & $1.000 \pm .000$   \\
$30$ & $0.871 \pm .034$ & $1.000 \pm .000$   \\
\bottomrule
\end{tabular}
\label{cosine_32}
}
\end{center}
\vspace{-2ex}
\end{table}

\begin{table}[!t]
\caption{Accuracy of the global models for the MNIST and CIFAR10 datasets. Finite field size $2^{31}-1$, $L=K$.}
\vspace{-3ex}
\begin{center}
\scalebox{0.8}{ 
\begin{tabular}{cccc}
\toprule
Clients $K$ & w/o Secure Aggr. & Finite Fields& Torus\\
\midrule
\multicolumn{4}{c}{MNIST} \\
\midrule
$5$ & $0.959 \pm .080$&$0.946 \pm .030$ & $0.959 \pm .080$  \\
$10$ &$0.943 \pm .099$&$0.940 \pm .098$ & $0.943 \pm .099$  \\
$15$ &$0.898 \pm .152$&$0.888 \pm .064$ & $0.898 \pm .152$   \\
$20$ &$0.860 \pm .095$&$0.842 \pm .081$ & $0.860 \pm .095$  \\
$30$ &$0.833 \pm .049$&$0.819 \pm .048$ & $0.833 \pm .049$  \\
\midrule
\multicolumn{4}{c}{CIFAR-10} \\
\midrule
$5$ & $0.865 \pm .035$&$0.857 \pm .031$ & $0.865 \pm .035$  \\
$10$ &$0.839 \pm .016$&$0.836 \pm .013$ & $0.839 \pm .016$  \\
$15$ &$0.822 \pm .048$&$0.809 \pm .040$ & $0.822 \pm .048$   \\
$20$ &$0.784 \pm .077$&$0.760 \pm .042$ & $0.784 \pm .077$   \\
$30$ &$0.710 \pm .051$&$0.701 \pm .014$ & $0.710 \pm .051$   \\
\bottomrule
\end{tabular}
\label{accuracy_32}
}
\end{center}
\vspace{-2ex}
\end{table}

%%%%%%%%%%32######################

\subsection{Secure aggregation on finite fields with size $2^{15}-1$}

To observe the overflow problem in standard secure aggregation over finite fields, in Table~\ref{cosine_16} and Table~\ref{accuracy_16} we show results for a prime cardinality for the finite field is set to $p = 2^{15} - 1$, with a precision of 4 degrees, as described in \cite{precision_values}. %We again analyze the impact of increasing the number of clients on the accuracy of the model and the cosine similarity between the model without secure aggregation and those with secure aggregation.
 Notably, since the torus-based protocol does not rely on a finite field, it successfully recovers the exact model. In contrast, secure aggregation over a finite field entails a significant deterioration in both the cosine similarity and accuracy, highlighting the challenges posed by limited representation capabilities in the finite field. On the left-hand side of Figures~\ref{fig:one} and~\ref{fig:two}, we observe that the accuracy of the models is not affected by the secure aggregation protocol, as the finite field size is sufficiently large. However, on the right-hand side of the same figures, we can see a significant drop in the model performance when a finite field with a smaller size is used.
 %(see also Figures~\ref{fig:one}, and ~\ref{fig:two}).

%%%%%%%%%%%%%%%%16bits##################

\begin{table}[!t]
\caption{Cosine similarity between the global model and the securely aggregated global models for the MNIST and CIFAR-10 datasets, with the finite field size $2^{15}-1$, and $L$ equal to the number of clients.}
\vspace{-3ex}
\begin{center}
\scalebox{0.8}{ 
\begin{tabular}{ccc}
\toprule
Clients&Finite Fields& Torus  \\
\midrule
\multicolumn{3}{c}{MNIST} \\
\midrule
$5$ & $0.675 \pm .013$ & $1.000 \pm .000$  \\
$10$ & $0.440 \pm .093$ & $1.000 \pm .000$  \\
$15$ & $0.327\pm .161$ & $1.000\pm .000$   \\
$20$ & $0.324 \pm .178$ & $1.000 \pm .000$   \\
$30$ & $0.104 \pm .060$ & $1.000 \pm .000$   \\
\midrule
\multicolumn{3}{c}{CIFAR-10} \\
\midrule
$5$ & $0.871 \pm .001$ & $1.000 \pm .000$  \\
$10$ & $0.716 \pm .041$ & $1.000 \pm .000$  \\
$15$ & $0.672 \pm .052$ & $1.000 \pm .000$   \\
$20$ & $0.644 \pm .072$ & $1.000\pm .000$   \\
$30$ & $0.452 \pm .016$ & $1.000\pm .000$   \\
\bottomrule
\end{tabular}
\label{cosine_16}
}
\end{center}
\vspace{-5ex}
\end{table}

\begin{table}[!t]
\caption{Accuracy of the global models for the MNIST, and CIFAR-10 datasets with finite field size $2^{15}-1$, and $L$ equal to the number of clients.}
\vspace{-3ex}
\begin{center}
\scalebox{.8}{ 
\begin{tabular}{cccc}
\toprule
Clients&w/o Secure Aggr.&Finite Fields& Torus\\
\midrule
\multicolumn{4}{c}{MNIST} \\
\midrule
$5$ & $0.959 \pm .080$&$0.137 \pm .042$ & $0.959 \pm .080$  \\
$10$ &$0.943 \pm .099$&$0.067 \pm .021$ & $0.943 \pm .099$  \\
$15$ &$0.898 \pm .152$&$0.065 \pm .029$ & $0.898 \pm .152$   \\
$20$ &$0.860 \pm .095$&$0.064 \pm .030$ & $0.860 \pm .095$  \\
$30$ &$0.833 \pm .049$&$0.044 \pm .124$ & $0.833 \pm .049$  \\

\midrule
\multicolumn{4}{c}{CIFAR-10} \\
\midrule
$5$ & $0.865 \pm .035$&$0.177 \pm .028$ & $0.865 \pm .035$  \\
$10$ &$0.839 \pm .016$&$0.117 \pm .017$ & $0.839 \pm .016$  \\
$15$ &$0.822 \pm .048$&$0.090 \pm .024$ & $0.822 \pm .048$   \\
$20$ &$0.784 \pm .077$&$0.090 \pm .026$ & $0.784 \pm .077$   \\
$30$ &$0.710 \pm .051$&$0.086 \pm .012$ & $0.710 \pm .051$   \\
\bottomrule
\end{tabular}
\label{accuracy_16}
}
\end{center}
\vspace{-4ex}
\end{table}

%%%%%%%%%%%%%%%%16bits#################

%%%%%%%%%%%%%%%%L##################

\subsection{Secure aggregation on the torus for different values of $L$}\label{about_L2}

As mentioned in Subsection~\ref{sec:about_L}, our protocol can also be affected by overflow problems depending on the value of $L$. However, unlike the finite field secure aggregation protocol, where the overflow is influenced by the exact values of the model parameters, $L$ is primarily determined by the number of clients. In Table~\ref{values_L}, we present the results for different values of $L$. When $L$ is set to a value smaller than the number of clients, we observe a decay in model performance, and the secure aggregated model diverges in a different direction. This issue is corrected when $L$ is set to a value greater than or equal to the number of clients.

\begin{table}[!t]
\caption{Accuracy and cosine similarity of secure aggregation on the torus for the MNIST and CIFAR-10 datasets. The experiments are conducted in a scenario with 10 clients.}
\vspace{-3ex}
\begin{center}
\scalebox{0.8}{ 
\begin{tabular}{ccc}
\toprule
$L$&Accuracy& Cosine Sim.\\
\midrule
\multicolumn{3}{c}{MNIST} \\
\midrule
$1$ & $0.114 \pm .040$ & $0.673 \pm .019$  \\
$10$ & $0.943 \pm .099$ & $1.000 \pm .000$  \\
$10^2 $ & $0.943 \pm .099$ & $1.000\pm .000$   \\
$10^3$ & $0.943 \pm .099$ & $1.000\pm .000$  \\
\midrule
\multicolumn{3}{c}{CIFAR-10} \\
\midrule
$1$ & $0.138 \pm .023$ & $0.872 \pm .005$  \\
$10$ & $0.839 \pm .016$ & $1.000 \pm .000$  \\
$10^2 $ & $0.839 \pm .016$ & $1.000\pm .000$   \\
$10^3$ & $0.839 \pm .016$ & $1.000\pm .000$  \\
\bottomrule
\end{tabular}
\label{values_L}
}
\end{center}
\vspace{-5ex}
\end{table}
% \section{Limitations}
% Our secure protocol assumes that all client participate on the aggregation step. If one the clients fails to send the weight the recovery of the model is not garantee. 

\section{Conclusions and Future Work}

We proposed a secure aggregation protocol for federated learning by leveraging the uniform distribution defined on the torus.  
We demonstrated that the mutual information between the local model parameters and the models shared with the server is zero, ensuring that the proposed protocol provides perfect privacy.
To evaluate the effectiveness of our protocol, we conducted experiments on a cross-silo scenario on the MNIST and CIFAR-10 datasets, showing that our protocol may significantly outperform secure aggregation over finite fields in terms of both accuracy and cosine similarity. Hence, the torus-based protocol is a safe choice providing perfect privacy and easily avoiding overflow issues. 

As future work, we will consider the presence of stragglers in the secure aggregation protocol over the torus topology and extend the protocol to support cross-device federated learning scenarios. 
%\cra{If we left with space then we can add the following remark somewhere.}
%\begin{remark}
%Applying the proposed protocol for a cross-device setup would require the ability to mitigate stragglers. 
To this end, we could use the method given in \cite[Section 4.0.2]{BIK2017} and implement a threshold secret sharing scheme over reals, where each client distributes shares of their randomness to all other clients. 
%This way pairwise randomness can be recovered even if some clients become unavailable during the recovery process, provided that a minimum number of participants (equal to the threshold) remain active and respond with the shares of the randomness from the dropped clients. 
However, after obtaining the necessary shares of randomness from the active clients, if the server receives a delayed response from any dropped client (who was assumed to be dropped due to a significant delay in its response), the server can recover its local parameters.
\ch{Charul, do we maintain perfect privacy? If yes, please add a short explanation here, or otherwise comment that it is lost by doing this. This could be just added as part of the future work, instead of a remark. Or if we add it as a remark earlier, then it seems funny to comment on stragglers as future work if we already explained how to do it.} \cra{If all dropped users stayed dropped then we are fine. If anyone of them replies later then it's data will be exposed. I added last line for it.}
\section*{Acknowledgment}
The authors would like to thank Okko Makkonen and David Karpuk for helpful discussions.

\clearpage
\newpage

\ch{We could and maybe should add a few more references.}
\IEEEtriggeratref{13}%put a ref number here to start a new column
\bibliographystyle{ieeetr}
%\nocite{*}
\bibliography{references_updated}

%References:
%Shokri, R., Stronati, M., Song, L., & Shmatikov, V. (2017). Membership Inference Attacks Against Machine Learning Models. IEEE Symposium on Security and Privacy.
%Bonawitz, K., Eichner, H., Gries, K., Hynes, R., Kloft, M., & Minka, T. (2017). Practical Secure Aggregation for Privacy-Preserving Machine Learning. ACM Conference on Computer and Communications Security (CCS).
%Abadi, M., Chu, A., Goodfellow, I., McMahan, H. B., Mironov, I., & Talwar, K. (2016). Deep Learning with Differential Privacy. ACM Conference on Computer and Communications Security (CCS).
%Shamir, A. (1979). How to Share a Secret. Communications of the ACM.
%Choi, H., Kim, T., & Lee, D. (2021). Secure Aggregation for Federated Learning in Privacy-Preserving Machine Learning. IEEE Transactions on Dependable and Secure Computing.

\end{document}

%NOTES: 

%FLOATING POINT VS FIXED POINT: 

%https://langdev.stackexchange.com/questions/665/what-are-the-pros-and-cons-of-fixed-point-arithmetics-vs-floating-point-arithmet

%Fixed point has a fixed accuracy, so either it's limited to storing only small values or it loses accuracy when the step size gets too big. In a floating point number the range is dynamic. The same data type can store 0.000000232 and 1,123.512.

%AND THEN IMPORTANT TO NOTE THAT FINITE FIELD ARITHMETIC IS NOT EQUIVALENT TO FIXED POINT ARITHEMTIC, AS WE ALSO NEED TO DO MODULAR ARITHMETIC (the fixed point computations are not doing mod p for us). 

%Also not sure actually if we should rather talk about INT and DOUBLE than fixed point (they are even more limited than fixed point if I understand well, and a more natural fit to integers mod p). 